%% file: main.tex
\documentclass{article}

\usepackage[preprint]{neurips_2026}

\usepackage[utf8]{inputenc}
\usepackage[T1]{fontenc}
\usepackage{hyperref}
\usepackage{url}
\usepackage{booktabs}
\usepackage{amsfonts}
\usepackage{amsmath}
\usepackage{amssymb}
\usepackage{amsthm}
\usepackage{nicefrac}
\usepackage{microtype}
\usepackage{xcolor}
\usepackage{graphicx}
\usepackage{tikz}
\usepackage{pgfplots}
\pgfplotsset{compat=1.18}
\usetikzlibrary{positioning, calc, fit, shapes.geometric, arrows.meta, backgrounds, decorations.pathreplacing, patterns}
\usepackage{algorithm}
\usepackage{algorithmic}
\usepackage{wrapfig}
\usepackage{multirow}
\usepackage{makecell}
\usepackage{caption}
\usepackage{subcaption}
\usepackage{tcolorbox}

\newcommand{\pref}{\mathbf{p}}
\newcommand{\rawpref}{\tilde{\mathbf{p}}}
\newcommand{\predpref}{\hat{\mathbf{p}}}
\newcommand{\delpref}{\boldsymbol{\Delta}}
\newcommand{\enc}{\mathrm{enc}}
\newcommand{\predfn}{f_{\theta}}
\newcommand{\method}{\textsc{Latte}}
\newcommand{\static}{\boldsymbol{\pi}}
\newcommand{\state}{\mathbf{s}}
\newcommand{\content}{\mathbf{c}}

\newtheorem{proposition}{Proposition}

\title{\method: Forecasting Peer Anchored Preference Trajectories for Personalized LLM Generation}

\author{%
  Jinze Li$^{1}$, Xiaoyan Yang$^{2}$, Shuo Yang$^{1}$, Jinfeng Xu$^{1}$,\\
  \textbf{Yue Shen$^{2}$, Jian Wang$^{2}$, Jinjie Gu$^{2}$, Edith Cheuk-Han Ngai$^{1,\dagger}$}\\
  \normalfont $^{1}$The University of Hong Kong \quad
  $^{2}$Ant Healthcare, Ant Group\\
  \normalfont $^\dagger$Corresponding authors.
}
\hypersetup{hidelinks}
\begin{document}

\maketitle

\begin{abstract}
Personalized generation with frozen large language models requires a conditioning signal that is both compact and current. Existing personalization methods typically retrieve or summarize user histories in text, or compress them into static latent profiles and soft prompts. These approaches are efficient, but they treat a user's past behavior as an aggregate profile and therefore mix stable identity, recent drift, and item content in the same representation. We propose \textbf{LA}tent \textbf{T}rajectory \textbf{T}racking and \textbf{E}xtrapolation (\method{}), a framework that represents personalization as forecasting a peer anchored relative preference state. For each historical session, \method{} subtracts a time masked baseline formed from comparable users who responded to the same item, producing a state that measures how the target user differs from peers under a shared item context. A lightweight sequence predictor then forecasts the next state in this trajectory, and a State to Token Bridge injects the forecast into a frozen instruction tuned LLM through a single anchored soft token. We provide a latent factor analysis showing when peer anchoring cancels shared item variation and why temporal forecasting trades off stale averages against noisy recent states. Experiments on Amazon Reviews 2023 and MemoryCD show that \method{} consistently outperforms retrieval, summary memory, static latent profiles, difference aware latent profiles, and soft prompt compression baselines. On Amazon Reviews 2023, \method{} improves average ROUGE-L from 0.219 for a static latent profile and 0.245 for the strongest added latent compression baseline to 0.259. Additional pairwise comparisons and diagnostic analyses suggest that the improvement is mainly due to forecasting user-specific trajectory information, rather than merely adding a soft prompt interface.
\end{abstract}

\section{Introduction}
\label{sec:intro}

Large language models (LLMs)~\citep{dubey2024llama3} are increasingly used in settings where the same input should lead to different outputs for different users. A review assistant should reflect a user's writing style. A recommendation explanation should emphasize the criteria that the user cares about. A long running conversational agent should adapt as a user changes goals, tone, or interests across sessions. These scenarios raise a basic question for personalized generation. What should a frozen LLM condition on when it generates for a particular user at a particular moment?

Most existing systems answer this question with a user profile. Prompt based methods retrieve or summarize previous interactions and place the resulting text in the context~\citep{salemi2024lamp,kumar2024longlamp,mysore2023pearl,salemi2024optimization}. Latent methods compress the history into an embedding, a steering direction, a user module, or a small set of soft prompt vectors~\citep{qiu2025dep,qiu2025dpl,hebert2024persoma,liu2024pplug,ning2024userllm}. These approaches are compact and often effective, but they usually treat the user as a static object. They aggregate past behavior into one representation and reuse it for future generation.

Static aggregation is limiting because the relevant user signal is often temporal. A reviewer may become more technical after years of short impressions. A reader may move from genre fiction to literary criticism. A conversational user may revise constraints over several sessions. In such cases, the useful signal is not only what the user has preferred on average, but where the user appears to be now. This distinction is especially important for frozen LLM personalization, because the model can often generate fluent and item relevant text from the target metadata alone. A stale or content dominated profile may therefore look plausible while failing to match the user's current behavior.

This paper studies personalized generation as latent state forecasting. Instead of compressing the full history into a single profile, we construct a sequence of relative preference states and forecast the state that should condition the next generation. This view separates three problems that static profiles merge. First, each historical response should be converted into a state that is comparable across different items. Second, the current state should be predicted from the ordered trajectory rather than estimated by an unordered average. Third, the predicted state should be injected into a frozen LLM without adding user specific parameters.

We propose \textbf{LA}tent \textbf{T}rajectory \textbf{T}racking and \textbf{E}xtrapolation (\method{}). For each historical session, \method{} forms a time masked peer baseline from comparable users who responded to the same item before the target timestamp. It subtracts this baseline from the target user's response embedding and normalizes the residual. The resulting vector is a peer anchored relative state. It asks how the user responded relative to similar peers under the same item context, which reduces shared item variation before temporal modeling.

Given the sequence of peer anchored states, \method{} trains a lightweight predictor to forecast the next state with a direct regression objective. This decouples state prediction from the language modeling loss. The separation is useful because generation loss alone can allow the conditioning vector to collapse into a low rank shortcut while the frozen LLM relies on item metadata. After forecasting, a State to Token Bridge maps the predicted state into the token embedding space of the frozen LLM. At inference time, the bridge replaces one placeholder token, and a natural language anchor tells the model how to interpret the injected state.

We evaluate \method{} on Amazon Reviews 2023 and MemoryCD against retrieval, summary memory, static latent profiles, recent and time decayed latent profiles, a DEP style difference aware static profile, and a PERSOMA style soft prompt compression baseline. The strongest comparisons are therefore not only against simple static profiles, but also against latent compression and difference aware user modeling. Across datasets, \method{} improves lexical overlap and history aware preference judgments. On Amazon Reviews 2023, it improves average ROUGE-L from .245 for the strongest added latent compression baseline to .259. Preference fidelity metrics, peer leakage controls, bootstrap intervals, and collapse diagnostics indicate that the gains come from forecasting user specific trajectory information rather than from the soft prompt slot alone.

Our contributions are as follows.
\begin{itemize}
    \item We formulate frozen LLM personalization as forecasting a peer anchored relative preference state, turning user history from a static profile into a time ordered latent trajectory.
    \item We introduce \method{}, a modular framework that constructs same item peer residual states, predicts the next state with a lightweight sequence model, and injects the forecast into a frozen LLM through one anchored soft token.
    \item We provide analytical and empirical evidence that peer anchoring reduces shared item variation, trajectory forecasting improves over static latent compression, and the observed gains are not explained by peer leakage, bridge mismatch, or representation collapse.
\end{itemize}

\section{Related Work}
\label{sec:related}

\textbf{Personalized generation and retrieval profiles.} Prompt based personalization conditions LLMs on user histories, retrieved examples, or textual summaries. LaMP~\citep{salemi2024lamp}, LongLaMP~\citep{kumar2024longlamp}, PEARL~\citep{mysore2023pearl}, and retrieval optimization methods~\citep{salemi2024optimization} establish retrieval as a strong baseline. Recent benchmarks make the evaluation more demanding. PersonalLLM studies individual preference variation at scale~\citep{zollo2025personallm}. PrefEval evaluates whether LLMs infer and follow user preferences in long multi session conversations~\citep{zhao2025prefeval}. HYDRA factorizes black box personalization into shared and user specific components over retrieved histories~\citep{zhuang2024hydra}. These works motivate our use of retrieval and history aware evaluation. \method{} differs by replacing retrieved or summarized text with a forecast latent state.

\textbf{Latent personalization.} Latent methods compress user information into embeddings, soft prompts, steering vectors, or user modules. PERSOMA compresses extensive history into soft prompt embeddings~\citep{hebert2024persoma}. PPlug and User-LLM introduce plug in user profile representations~\citep{liu2024pplug,ning2024userllm}. DEP and DPL show that inter user differences are useful for personalization~\citep{qiu2025dep,qiu2025dpl}. Personalized steering vectors and parameter efficient adaptation provide alternative latent control mechanisms~\citep{cao2024bipo,tan2024perlora}. This line of work is closest to ours. We use the same core insight that relative signals can be more informative than user only signals. The key difference is temporal. DEP and DPL construct static difference aware representations from selected or aggregated histories. \method{} constructs a sequence of peer anchored states and forecasts the next state before generation. Our DEP style baseline uses the same peer anchored states but averages them, which isolates the contribution of trajectory forecasting.

\textbf{Long horizon memory.} Long term memory benchmarks such as LoCoMo~\citep{maharana2024locomo}, LongMemEval~\citep{wu2025longmemeval}, PersonaMem~\citep{lin2025personamem}, PerLTQA~\citep{du2024perltqa}, PrefEval~\citep{zhao2025prefeval}, and MemoryCD~\citep{zhang2026memorycd} document failures of long context and retrieval based personalization. Memory architectures store hidden states, key value memories, or retrieved chunks, as in Memorizing Transformers~\citep{wu2022memorizing}, Recurrent Memory Transformer~\citep{bulatov2022rmt}, RETRO~\citep{borgeaud2022retro}, and MEMORYLLM~\citep{wang2024memoryllm}. \method{} addresses a complementary bottleneck. Instead of storing more text or activations, it learns a compact current preference state that can be injected through one token.

\textbf{Sequential user modeling.} Sequential recommendation models evolving user behavior with recurrent, attentive, bidirectional, diffusion, and instance adaptive architectures~\citep{hidasi2016gru4rec,kang2018sasrec,sun2019bert4rec,yang2023dreamrec,kong2024ilora}. Time series forecasting studies long horizon prediction with decomposed Transformers, frequency models, patching, and task general temporal backbones~\citep{wu2021autoformer,zhou2022fedformer,nie2023patchtst,wu2023timesnet,zeng2023dlinear}. We borrow the trajectory view, but the predicted object is not the next item and not a scalar time series. It is a peer normalized latent state used to condition a frozen language model.

\section{Method}
\label{sec:method}

\begin{figure}[t]
  \centering
  \includegraphics[width=0.8\linewidth]{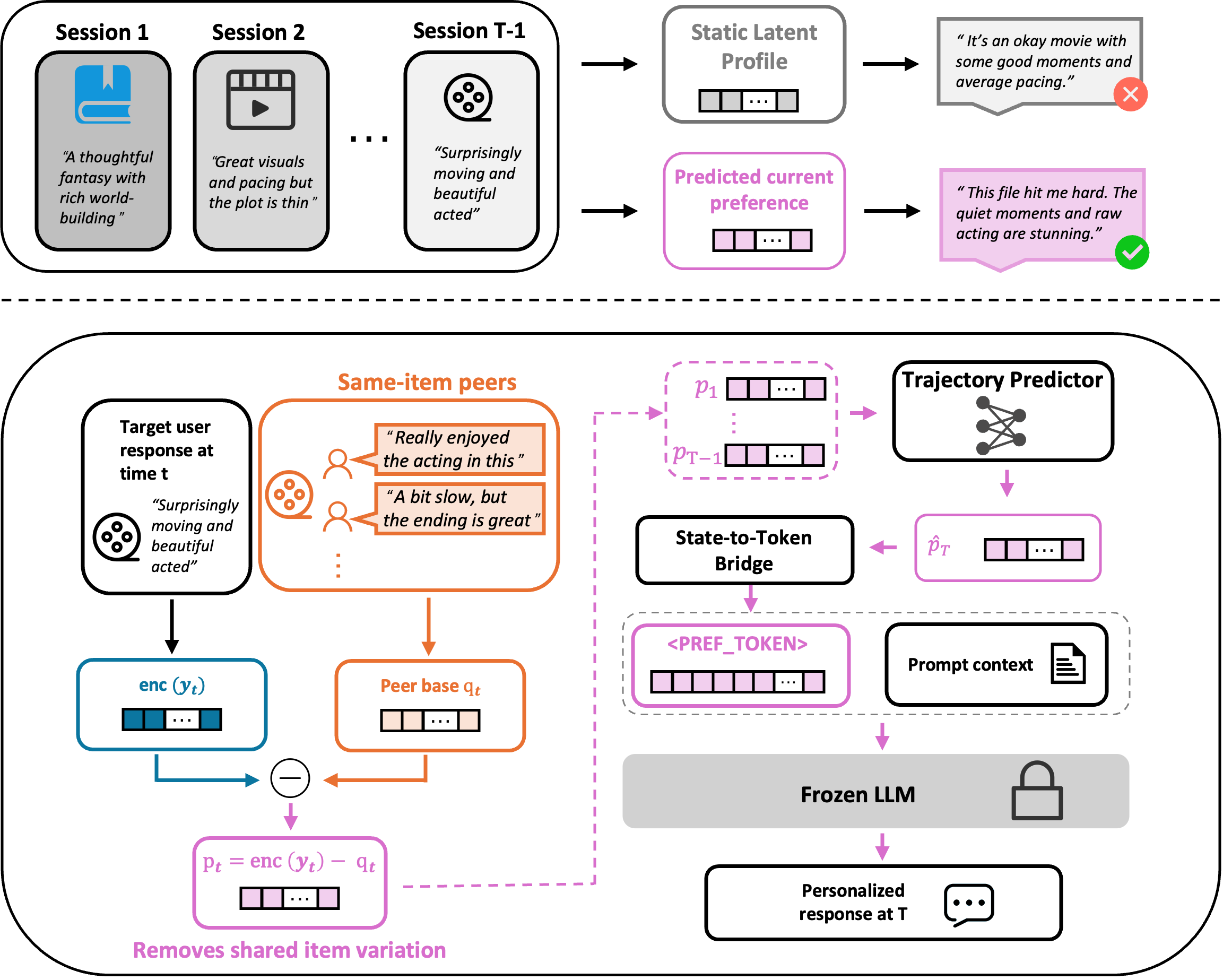}
  \caption{\textbf{LATTE forecasts peer anchored preference trajectories for personalized generation.}
Top: a static latent profile aggregates the user’s history into one vector and can miss recent preference shifts, while LATTE forecasts the user’s current preference state. Bottom: LATTE first constructs peer anchored relative states from historical sessions, then uses a trajectory predictor to forecast the current state, and finally injects the forecast into a frozen LLM through a State-to-Token Bridge.}
  \label{fig:overview}
  \vspace{-1em}
\end{figure}

\method{} has three stages. Stage 1 constructs one peer anchored relative state for each historical session. Stage 2 trains a predictor to forecast the next state from the observed trajectory. Stage 3 maps the forecast into the hidden dimension of a frozen LLM with a State-to-Token Bridge and injects it as one anchored soft prompt token. The stages are trained separately, which makes the representation target, predictor, and bridge module independently testable.

\subsection{Preliminaries}
\label{sec:method-pre}

\textbf{Dynamic personalization setting.} A user $u$ has a chronological history $\mathcal{H}_{u,T-1}=\{(i_1,u_1,\tau_1),\ldots,(i_{T-1},u_{T-1},\tau_{T-1})\}$, where $i_t$ is an item or context, $u_t$ is the user's textual response, and $\tau_t$ is the timestamp. At time $T$, the model receives target metadata $x_T$ for item $i_T$ and must generate $y_T$ in the user's current style and preference state. We train and evaluate with temporal splits, where later responses are never used to build earlier histories.

\textbf{Encoder and base model.} Let $\enc(\cdot)$ be a frozen sentence encoder that maps text to a $d$ dimensional unit norm embedding. We use bge m3~\citep{chen2024bgem3} with $d=1024$. The base generator is a frozen instruction tuned LLM $\mathcal{M}$ with token embedding matrix $E\in\mathbb{R}^{|V|\times h}$. Personalization is performed by adding a placeholder token \texttt{[PREF\_TOKEN]} and replacing its embedding at runtime. All LLM weights remain frozen.

\textbf{Static latent profile.} The standard latent profile compresses the history into a single vector
\begin{equation}
\static(u)=A\big(\enc(u_1),\ldots,\enc(u_{T-1})\big),
\label{eq:static}
\end{equation}
where $A$ may be mean pooling, attention pooling, or a learned encoder~\citep{qiu2025dep,hebert2024persoma}. \method{} replaces this static object with a sequence $\pref_1(u),\ldots,\pref_{T-1}(u)$ and a forecast $\predpref_T(u)$.

\textbf{Notation.} We write $\rawpref_t(u)$ for the unnormalized peer anchored residual, $\pref_t(u)\in\mathbb{R}^{d}$ for its normalized state, $\bar{q}_t(u)$ for the peer baseline, $\delpref_t(u)=\pref_t(u)-\pref_{t-1}(u)$ for an adjacent change, and $\predpref_T(u)$ for the predicted current state. Bold lowercase symbols denote vectors.

\subsection{Latent Preference Trajectory}
\label{sec:method-pref}

\textbf{Peer anchored state.} Raw response embeddings are not suitable trajectory states because their changes often reflect item changes. We therefore construct a relative state for each historical session. Let $\mathcal{N}_m(i_t,t)$ be up to $m$ peer users who reviewed the same item $i_t$ before timestamp $\tau_t$. The target user is excluded. Peers with fewer than four earlier interactions are excluded. Let $r_{v,i_t}$ be peer $v$'s response to item $i_t$. We define the earlier interaction set $\mathcal{I}_{u,t}=\{\ell:\tau_\ell<\tau_t\}$ and use the profile summary
\begin{equation}
\phi_u(t)=
\begin{cases}
\frac{\sum_{\ell\in\mathcal{I}_{u,t}}\enc(u_\ell)}{\left\|\sum_{\ell\in\mathcal{I}_{u,t}}\enc(u_\ell)\right\|_2}, & \mathcal{I}_{u,t}\neq\emptyset,\\[6pt]
\mathbf{0}, & \mathcal{I}_{u,t}=\emptyset.
\end{cases}
\label{eq:profile-summary}
\end{equation}
If either the target profile or all peer profiles are zero, we use uniform weights. Otherwise, peer weights are computed with temperature $\gamma$,
\begin{equation}
 w_{u,v,t}=\frac{\exp(\gamma\langle \phi_u(t),\phi_v(t)\rangle)}{\sum_{v'\in\mathcal{N}_m(i_t,t)}\exp(\gamma\langle \phi_u(t),\phi_{v'}(t)\rangle)}.
\label{eq:peer-weights}
\end{equation}
The baseline, residual, and normalized state are
\begin{equation}
\bar{q}_t(u)=\sum_{v\in\mathcal{N}_m(i_t,t)}w_{u,v,t}\,\enc(r_{v,i_t}),\quad
\rawpref_t(u)=\enc(u_t)-\bar{q}_t(u),\quad
\pref_t(u)=\frac{\rawpref_t(u)}{\|\rawpref_t(u)\|_2}.
\label{eq:pref}
\end{equation}
The peer baseline is used only for historical sessions. At test time, no peer review of the target item $i_T$ is placed in the LLM prompt or used to construct $\predpref_T(u)$.

\textbf{Why subtraction helps.} The following proposition states the sense in which peer anchoring removes shared item variation. It applies to the raw residual $\rawpref_t$. The predictor uses the normalized direction $\pref_t$, which preserves the relative direction of $\rawpref_t$ when the residual norm is nonzero.

\begin{proposition}[Peer anchoring under an additive embedding model]
\label{prop:anchor}
Condition on the peer set and weights in Eq.~\ref{eq:peer-weights}. Assume a response embedding for user $u$ on item $i_t$ has the form
\begin{equation}
\enc(u_t)=\content_{i_t,t}+\state_{u,t}+\boldsymbol{\epsilon}_{u,t},
\label{eq:additive-model}
\end{equation}
where $\content_{i_t,t}$ is an item component shared by users at time $t$, $\state_{u,t}$ is a user state, and $\boldsymbol{\epsilon}_{u,t}$ is zero mean noise independent across users after conditioning on the weights. Assume each peer response has the same item component and peer state $\state_{v,t}$. Then
\begin{equation}
\mathbb{E}\big[\rawpref_t(u)\mid \state_{u,t},\{\state_{v,t}\}_{v\in\mathcal{N}_m}\big]
=\state_{u,t}-\sum_{v\in\mathcal{N}_m(i_t,t)}w_{u,v,t}\state_{v,t}.
\label{eq:anchor-expectation}
\end{equation}
Thus the shared item term is removed in expectation. If item variation has covariance $\Sigma_c$ and encoder noise has covariance $\sigma^2 I$, the raw embedding includes the additional covariance $\Sigma_c$, while the anchored residual has noise covariance $\sigma^2(1+\sum_v w_{u,v,t}^2)I$ around the relative state.
\end{proposition}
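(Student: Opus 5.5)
The plan is to substitute the additive model of Eq.~\ref{eq:additive-model} into the residual of Eq.~\ref{eq:pref} and then use linearity of expectation. Because we condition on the peer set $\mathcal{N}_m(i_t,t)$ and on the weights $w_{u,v,t}$, the weights can be treated as constants, and the baseline $\bar{q}_t(u)$ is a fixed linear combination of peer embeddings. Each peer embedding is written as $\enc(r_{v,i_t})=\content_{i_t,t}+\state_{v,t}+\boldsymbol{\epsilon}_{v,t}$. The weights are softmax outputs from Eq.~\ref{eq:peer-weights}, or uniform weights in the degenerate case, so they satisfy $\sum_v w_{u,v,t}=1$. The baseline therefore becomes
\[
\bar{q}_t(u)=\content_{i_t,t}+\sum_v w_{u,v,t}\state_{v,t}+\sum_v w_{u,v,t}\boldsymbol{\epsilon}_{v,t},
\]
and the residual is
\[
\rawpref_t(u)=\state_{u,t}-\sum_v w_{u,v,t}\state_{v,t}+\Big(\boldsymbol{\epsilon}_{u,t}-\sum_v w_{u,v,t}\boldsymbol{\epsilon}_{v,t}\Big).
\]
The item term $\content_{i_t,t}$ cancels exactly. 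This holds pointwise, not merely in expectation, because the weights sum to one. Taking the conditional expectation given the user and peer states, and using that each noise term has zero mean given the weights, gives Eq.~\ref{eq:anchor-expectation}.

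For the covariance claim, I would first compare the raw embedding with the residual. If $\content_{i_t,t}$ is random with covariance $\Sigma_c$ and is independent of the noise, then $\enc(u_t)$ has covariance at least $\Sigma_c+\sigma^2 I$ around $\state_{u,t}$. The residual, by contrast, contains no $\content$ term. Its fluctuation around the relative state $\state_{u,t}-\sum_v w_{u,v,t}\state_{v,t}$ is the combined noise $\boldsymbol{\epsilon}_{u,t}-\sum_v w_{u,v,t}\boldsymbol{\epsilon}_{v,t}$. Since the noise terms are independent across users given the weights, and each has covariance $\sigma^2 I$, the cross terms vanish. The covariance is therefore $\sigma^2 I+\sum_v w_{u,v,t}^2\sigma^2 I=\sigma^2\big(1+\sum_v w_{u,v,t}^2\big)I$.

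The main obstacle is justifying that the weights can be treated as fixed. The weights depend on the profile summaries $\phi_u(t)$ and $\phi_v(t)$ from Eq.~\ref{eq:profile-summary}, which are built only from interactions strictly before $\tau_t$. They therefore do not involve the current responses $u_t$ or $r_{v,i_t}$. I would state explicitly that the proposition's hypothesis, namely noise that is zero mean and independent across users conditional on the weights, is exactly what licenses pulling the weights out of the expectation and dropping the cross-covariance terms. I would also make explicit two assumptions the statement leaves implicit: the noise is independent of the states and of $\content_{i_t,t}$, and every peer shares the same item component $\content_{i_t,t}$. Finally, I would note that normalization to $\pref_t$ is nonlinear, so the expectation identity holds only for $\rawpref_t$, as the paper remarks before the proposition.
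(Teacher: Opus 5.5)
Your proposal is correct and follows the same route as the paper's proof: substitute the additive model into Eq.~\ref{eq:pref}, use $\sum_v w_{u,v,t}=1$ to cancel $\content_{i_t,t}$ (pointwise, as you note), take the conditional expectation, and read off the covariance $\sigma^2(1+\sum_v w_{u,v,t}^2)I$ from the independence of the noise terms, with the details the paper leaves implicit written out. One caution concerns a side remark that your proof does not need: every $v\in\mathcal{N}_m(i_t,t)$ wrote $r_{v,i_t}$ before $\tau_t$, so under the natural reading that review enters $\phi_v(t)$ and the weights can depend on $\boldsymbol{\epsilon}_{v,t}$; time masking therefore does not by itself justify treating the weights as fixed, and the work is done by the proposition's explicit conditional hypothesis, which you correctly invoke.
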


\begin{proof}
Substitute Eq.~\ref{eq:additive-model} into Eq.~\ref{eq:pref}. The weighted peer baseline contains $\content_{i_t,t}$ because the weights sum to one. The item component cancels, and the remaining expected value is the user state minus the weighted peer state. The covariance comparison follows from the independence of the item component and the zero mean encoder noise.
\end{proof}

\textbf{Why forecasting helps.} A static profile can be optimal when the state is stationary and noise is high, but it becomes temporally stale under drift. The next proposition separates this bias effect from the variance reduction obtained by averaging.

\begin{proposition}[Bias and variance under local linear drift]
\label{prop:drift}
Assume normalized relative states are locally approximated by $\pref_t=\mathbf{a}+t\mathbf{g}+\boldsymbol{\eta}_t$, where $\mathbb{E}[\boldsymbol{\eta}_t]=0$ and $\boldsymbol{\eta}_t$ is independent across sessions with covariance $\sigma^2 I$. Let $\boldsymbol{\mu}_T=\mathbf{a}+T\mathbf{g}$ be the conditional mean next state. The static average estimator $\bar{\pref}=\frac{1}{T-1}\sum_{t=1}^{T-1}\pref_t$ has squared bias $\frac{T^2}{4}\|\mathbf{g}\|_2^2$ and variance trace $d\sigma^2/(T-1)$ for estimating $\boldsymbol{\mu}_T$. The last state estimator $\pref_{T-1}$ has squared bias $\|\mathbf{g}\|_2^2$ and variance trace $d\sigma^2$. Thus the static average can have lower mean squared error than the last state when drift is small relative to observation noise, while an order aware linear forecast can remove the static lag under the local linear model.
\end{proposition}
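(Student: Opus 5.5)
The plan is a direct moment computation under the stated model, treating $\pref_t=\mathbf{a}+t\mathbf{g}+\boldsymbol{\eta}_t$ as a given local approximation. I will not impose the unit-norm constraint on $\pref_t$; it plays no role in the computation. For any estimator $\hat{\boldsymbol{\mu}}$ that is linear in the observed states, I will use the decomposition $\mathbb{E}\|\hat{\boldsymbol{\mu}}-\boldsymbol{\mu}_T\|_2^2=\|\mathbb{E}\hat{\boldsymbol{\mu}}-\boldsymbol{\mu}_T\|_2^2+\operatorname{tr}\operatorname{Cov}(\hat{\boldsymbol{\mu}})$. Here $\boldsymbol{\mu}_T$ is treated as a fixed target, so only the noise in the history contributes to the variance.

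\textbf{Static average.} By linearity, $\mathbb{E}\bar{\pref}=\mathbf{a}+\mathbf{g}\cdot\frac{1}{T-1}\sum_{t=1}^{T-1}t=\mathbf{a}+\frac{T}{2}\mathbf{g}$. The bias is therefore $\boldsymbol{\mu}_T-\mathbb{E}\bar{\pref}=\frac{T}{2}\mathbf{g}$, with squared norm $\frac{T^2}{4}\|\mathbf{g}\|_2^2$. Since the $\boldsymbol{\eta}_t$ are independent with covariance $\sigma^2 I$, $\operatorname{Cov}(\bar{\pref})=\frac{\sigma^2}{T-1}I$, whose trace is $d\sigma^2/(T-1)$.

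\textbf{Last state.} Here $\mathbb{E}\pref_{T-1}=\mathbf{a}+(T-1)\mathbf{g}$, so the bias is $\mathbf{g}$ and the variance trace is $d\sigma^2$.

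\textbf{Comparison.} The average has lower MSE exactly when
\[
\frac{T^2}{4}\|\mathbf{g}\|_2^2+\frac{d\sigma^2}{T-1}<\|\mathbf{g}\|_2^2+d\sigma^2,
\]
that is, when
\[
\Bigl(\frac{T^2}{4}-1\Bigr)\|\mathbf{g}\|_2^2<d\sigma^2\,\frac{T-2}{T-1}.
\]
Because the statement only says the average ``can'' win, it is enough to show this inequality holds whenever $\|\mathbf{g}\|_2^2/\sigma^2$ is small enough for fixed $T\ge 3$, and fails when the drift is large. This makes precise the phrase ``drift small relative to observation noise.''

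\textbf{Order-aware linear forecast.} I take the componentwise ordinary least squares fit of $\pref_t$ on $(1,t)$ over $t=1,\dots,T-1$ and extrapolate it to $t=T$. OLS is unbiased for $\mathbf{a}$ and $\mathbf{g}$ under the model, so the forecast $\hat{\mathbf{a}}+T\hat{\mathbf{g}}$ has zero bias, which removes the $\frac{T}{2}\mathbf{g}$ lag. For completeness I will also record its variance trace, $d\sigma^2\bigl(\frac{1}{T-1}+\frac{(T-\bar t)^2}{\sum_t(t-\bar t)^2}\bigr)$ with $\bar t=T/2$. This is of order $d\sigma^2/T$, which shows the forecast does not pay the full last-state variance.

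\textbf{Main obstacle.} The computations themselves are routine. The main difficulty is interpretive: phrasing the ``can have lower MSE'' and ``can remove the static lag'' claims as precise existence and unbiasedness statements under the local linear model, and stating clearly that the normalization of $\pref_t$ is ignored within that approximation.
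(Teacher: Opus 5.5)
Your argument is correct and follows the paper's proof essentially step for step. You compute the means and covariance traces of $\bar{\pref}$ and $\pref_{T-1}$ directly, and you use unbiasedness of the OLS extrapolator on $(1,t)$, which, as the paper notes, needs $T-1\geq 2$ time points. Your explicit comparison, equivalent to $\|\mathbf{g}\|_2^2<4d\sigma^2/((T-1)(T+2))$, usefully makes precise the ``can have lower MSE'' claim that the paper leaves informal.

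One correction concerns your optional remark on the OLS forecast. Its variance trace simplifies to $d\sigma^2\,\frac{2(2T-1)}{(T-1)(T-2)}$. This behaves like $4d\sigma^2/T$ for large $T$, but it exceeds the last-state value $d\sigma^2$ for $3\leq T\leq 6$. For example, at $T=3$, OLS reduces to $2\pref_{2}-\pref_{1}$ and the trace is $5d\sigma^2$. So your claim that the forecast ``does not pay the full last-state variance'' holds only for $T\geq 7$, and you should qualify or drop it.
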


\begin{proof}
The expectation of the static average is $\mathbf{a}+\frac{T}{2}\mathbf{g}$, while the target mean is $\mathbf{a}+T\mathbf{g}$. Its squared bias is therefore $\|\frac{T}{2}\mathbf{g}\|_2^2$, and its variance trace is $d\sigma^2/(T-1)$. The expectation of the last state is $\mathbf{a}+(T-1)\mathbf{g}$, so its squared bias is $\|\mathbf{g}\|_2^2$, and its variance trace is $d\sigma^2$. For $T-1\geq 2$, an ordinary least squares extrapolator fitted to the ordered pairs $(t,\pref_t)$ is unbiased for $\boldsymbol{\mu}_T$ under this model because the design contains both an intercept and the time index.
\end{proof}

\noindent\textbf{Remark.} Proposition~\ref{prop:drift} is a bias and variance statement, not a claim that the most recent state should always beat an average. Averaging can improve downstream generation when residual states are noisy or when stable user traits dominate short term drift. This is why the experiments compare static averages, last state forecasts, exponential smoothing, learned attention, and recurrent predictors. The learned predictors are intended to use order while still smoothing across multiple observations.

\textbf{Temporal prediction target.} Adjacent changes decompose as
\begin{equation}
\delpref_t(u)=\pref_t(u)-\pref_{t-1}(u).
\label{eq:delta}
\end{equation}
The subtraction in Eq.~\ref{eq:pref} reduces the item component before the predictor models these changes. The current state is forecast as
\begin{equation}
\predpref_T(u)=\predfn\big(\pref_1(u),\ldots,\pref_{T-1}(u)\big).
\label{eq:predict}
\end{equation}

\subsection{Trajectory Forecasting}
\label{sec:method-predictor}

\textbf{Objective.} The predictor is trained offline to regress to a held out constructed state $\pref_T(u)$ from the next chronological session. This target is a well defined derived statistic, namely the next response embedding after peer normalization. It is not a directly observed latent quantity. The loss is
\begin{equation}
\mathcal{L}_{\mathrm{pred}}(\theta)=\mathbb{E}_{u,T}\left[1-\cos\big(\predpref_T(u),\pref_T(u)\big)+\lambda\left\|\predpref_T(u)-\pref_T(u)\right\|_2^2\right].
\label{eq:loss-pred}
\end{equation}
Rolling temporal holdouts provide training pairs. A prefix predicts the next session state, and validation and test responses are never used to construct earlier histories, peer weights, or predictor inputs. Predictor training starts once at least four constructed states are available. Earlier sessions still provide states for later prefixes.

\textbf{Predictor family.} We evaluate six predictors. P0 uses the last state, $\predpref_T=\pref_{T-1}$. P1 uses a linear trend, $\predpref_T=\pref_{T-1}+(\pref_{T-1}-\pref_{T-2})$. P2 is an exponential moving average, $\predpref_T=\beta\pref_{T-1}+(1-\beta)\bar{\pref}$, where $\bar{\pref}=\frac{1}{T-1}\sum_{t=1}^{T-1}\pref_t$. P3 is learned attention pooling with the most recent state as query. It scores each previous state by
\begin{equation}
\begin{aligned}
 s_t &= \mathbf{v}_a^\top\tanh(W_h\pref_t+W_q\pref_{T-1}+\mathbf{b}_a),\\
 \alpha_t &= \frac{\exp(s_t)}{\sum_{\ell=1}^{T-1}\exp(s_\ell)},\\
 \predpref_T &= \mathrm{norm}\left(W_o\,\mathrm{concat}\left(\sum_{t=1}^{T-1}\alpha_t\pref_t,\pref_{T-1}\right)\right).
\end{aligned}
\label{eq:attention-predictor}
\end{equation}
where $\mathrm{norm}$ denotes unit normalization. P4 is a one layer GRU~\citep{cho2014gru} followed by a linear head. P5 is a two layer Transformer encoder~\citep{vaswani2017attention}. All predictors output one $d$ dimensional vector.

\subsection{State-to-Token Bridge and Decoupled Training}
\label{sec:method-injection}

\textbf{Anchored soft token.} The forecast state is exposed to the frozen LLM through a lightweight State-to-Token Bridge, denoted STB. The bridge is an interface between the predicted state space and the token embedding space, not the source of the personalization representation. Formally,
\begin{equation}
\mathbf{e}_{\mathrm{pref}}=B_{\psi}(\predpref_T(u))=\mathrm{Proj}_{\psi_2}\big(g_{\psi_1}(\predpref_T(u))\big),\quad
\mathrm{embed}(\texttt{[PREF\_TOKEN]})=\mathbf{e}_{\mathrm{pref}}.
\label{eq:inject}
\end{equation}
Here $g_{\psi_1}$ is a 512 dimensional bottleneck state filter, and $\mathrm{Proj}_{\psi_2}$ maps the filtered state to the LLM hidden size. The token is preceded by a natural language anchor shown in Appendix~\ref{app:prompt}. Without the anchor, the LLM receives an embedding in an otherwise unmarked slot and cannot reliably infer its role. We use the same STB architecture for all one token latent baselines, so the experiments isolate the representation being injected rather than the injection mechanism.

\textbf{Training the bridge.} The STB is trained with generation loss and auxiliary state regularization,
\begin{equation}
\mathcal{L}_{\mathrm{bridge}}=\mathcal{L}_{\mathrm{NLL}}+\alpha\big(\mathcal{L}_{\mathrm{recon}}+\beta\mathcal{L}_{\mathrm{sparsity}}\big),
\label{eq:loss-bridge}
\end{equation}
where $\mathcal{L}_{\mathrm{NLL}}$ is the negative log likelihood of the user response, $\mathcal{L}_{\mathrm{recon}}$ reconstructs the input state from the bottleneck through an auxiliary decoder, and $\mathcal{L}_{\mathrm{sparsity}}$ is a KL activation penalty targeting rate $\rho=0.05$. The auxiliary decoder is used only during bridge training. The base LLM, encoder, and predictor remain frozen. The main model trains the STB with observed training session states. For latent baselines, we train a representation specific STB with the same architecture, optimizer, number of examples, and validation criterion. 

\textbf{Why decoupling matters.} End to end training can optimize the predictor, state filter, and token projection only through $\mathcal{L}_{\mathrm{NLL}}$. This gives a weak identifiability signal. A frozen LLM can often generate plausible text from the item metadata alone, which allows a constant or low rank conditioning vector to obtain reasonable loss. Decoupled regression prevents this shortcut by requiring the predictor output to match the held out state before the LLM sees it. This is analogous to collapse avoidance in self supervised representation learning~\citep{grill2020byol,bardes2022vicreg}, where constant outputs can satisfy part of the objective unless variance preserving structure is imposed.

\section{Experiments}
\label{sec:experiments}

\subsection{Experimental Setup}
\label{sec:exp-setup}

\textbf{Datasets.} We evaluate on Amazon Reviews 2023~\citep{hou2024amazon23} and MemoryCD~\citep{zhang2026memorycd}. For Amazon Reviews 2023, we use Books, Movies\_and\_TV, and CDs\_and\_Vinyl. We follow the preprocessing protocol of \citet{au2026peregrine}: reviews after 2016, at least eight earlier reviews per user, at least four time valid peer reviews per historical item, and at least 30 characters per response. After filtering, each category has about 12.6K users and 31.4K target instances. The last review of each user is test, the second last review is validation, and the third last review is the train target for the injection module. Rolling prefixes before these held out sessions train the predictor. For MemoryCD, we use the Books domain and sample 500 users with at least 100 reviews each. Table~\ref{tab:data-coverage} reports the peer coverage and history statistics induced by these filters.

\begin{table}[t]
\centering
\caption{Dataset coverage after chronological and peer availability filtering. Target instances are rolling prediction instances used for training, validation, and test. Peer coverage counts time valid same item peers for historical sessions.}
\label{tab:data-coverage}
\small
\setlength{\tabcolsep}{4pt}
\begin{tabular}{l c c c c c}
\toprule
Dataset & Users & Target inst. & Median history & Median peers & User retention \\
\midrule
Amazon Books & 12.9K & 32.1K & 18 & 13 & 19.4\% \\
Amazon Movies\_and\_TV & 12.5K & 31.0K & 15 & 11 & 21.7\% \\
Amazon CDs\_and\_Vinyl & 12.4K & 31.1K & 14 & 12 & 24.1\% \\
MemoryCD Books & 500 & 48.2K & 103 & 17 & 100.0\% \\
\bottomrule
\end{tabular}
\vspace{-1em}
\end{table}

\textbf{Models.} The base LLM is Llama 3.1 8B Instruct~\citep{dubey2024llama3}. The encoder is bge m3~\citep{chen2024bgem3}. The STB maps 1024 dimensional states to a 512 dimensional compressive bottleneck with KL sparsity target $\rho=0.05$. The bottleneck is intentionally undercomplete because its role is to denoise the state before token projection, not to learn an overcomplete dictionary. The token projection is a two layer MLP. The default predictor is P4, a one layer GRU with hidden size 512 trained for 15 epochs with AdamW and learning rate $3\times10^{-4}$. The predictor loss uses $\lambda=0.01$ unless stated otherwise. The STB is trained for six epochs with $\alpha=0.01$ and $\beta=10^{-3}$.

\textbf{Baselines.} We compare to text, memory, and latent baselines. \emph{No personalization} uses only target metadata. \emph{Recent text} concatenates the most recent $K$ reviews. \emph{Retrieved text} retrieves the $K$ user reviews most similar to the target metadata by bge m3 cosine similarity. \emph{Summary memory} summarizes the full history into a compact textual profile and places the summary in the prompt. \emph{Static latent profile} averages all past response embeddings and injects the result with its own trained STB. \emph{Recent latent} averages only the latest eight states. \emph{Time decayed latent} uses exponential time decay over all states. \emph{DEP style static} averages the same peer anchored states used by \method{} and injects the average without trajectory prediction. \emph{PERSOMA style} uses 16 learned soft prompt tokens produced by a history compression encoder trained with the same frozen LLM and generation objective, following the soft prompt compression paradigm of \citet{hebert2024persoma}. \emph{\method{} zero} uses the last state as the forecast. \emph{\method{} EMA} uses P2. 

\textbf{Metrics.} We report ROUGE-1, ROUGE-L, and BLEU. Because lexical overlap does not fully measure personalization, we also report history aware pairwise win rate. The judge is Qwen3 235B~\citep{yang2025qwen3}. It sees target metadata, a compact early history, a compact recent history, and two anonymized generations. It is asked which generation better matches the user's current writing style and content preference while remaining faithful to the item. Each pair is evaluated twice with candidate order swapped. Contradictory choices are counted as ties. We additionally report preference fidelity metrics in Section~\ref{sec:exp-fidelity}. 

\textbf{Reproducibility details.} We use $m=16$ peers and temperature $\gamma=10$ for Eq.~\ref{eq:peer-weights}. Since profile summaries are unit normalized, this temperature concentrates the peer baseline on behaviorally similar same item peers rather than an unweighted item average. Predictor batch size is 256. STB batch size is 32 with gradient accumulation 4. Decoding uses temperature 0.7, top $p=0.9$, and maximum generation length 160 tokens. All runs use user level chronological splits. Peer retrieval excludes the target user and every peer interaction after the relevant session timestamp. 
\subsection{Main Results}
\label{sec:exp-main}

\begin{table}[t]
\centering
\caption{Main results on Amazon Reviews 2023 averaged over Books, Movies\_and\_TV, and CDs\_and\_Vinyl. HistWin is history aware pairwise win rate against the static latent profile. Higher is better for all metrics.}
\label{tab:main-results}
\small
\setlength{\tabcolsep}{5pt}
\begin{tabular}{l c c c c}
\toprule
Method & R-1 & R-L & BLEU & HistWin \\
\midrule
No personalization & .199 & .176 & .057 & 27.1 \\
Recent text, $K=8$ & .216 & .192 & .072 & 35.2 \\
Retrieved text, $K=8$ & .224 & .204 & .079 & 41.4 \\
Retrieved text, $K=32$ & .231 & .211 & .084 & 44.8 \\
Summary memory & .233 & .214 & .085 & 46.2 \\
\midrule
Static latent profile & .237 & .219 & .088 & 50.0 \\
Recent latent, last 8 & .241 & .223 & .090 & 51.8 \\
Time decayed latent & .248 & .231 & .096 & 54.9 \\
DEP style static & .255 & .240 & .101 & 57.4 \\
PERSOMA style & .260 & .245 & .104 & 58.6 \\
\midrule
\method{} zero & .239 & .222 & .090 & 51.0 \\
\method{} EMA & .252 & .235 & .099 & 55.8 \\
\method{} learned attention & .266 & .252 & .110 & 61.4 \\
\method{} GRU & \textbf{.273} & \textbf{.259} & \textbf{.114} & \textbf{64.0} \\
\bottomrule
\end{tabular}
\vspace{-1em}
\end{table}

Table~\ref{tab:main-results} shows that latent conditioning is stronger than text profiles and that static latent profiles are not the strongest non trajectory baseline. Increasing the retrieval budget from $K=8$ to $K=32$ helps, but it remains below static latent compression. Summary memory is also below the latent baselines, which suggests that the gain is not simply due to exposing more history in natural language. The closest comparisons are DEP style static and PERSOMA style. They use difference aware or soft prompt compression but do not forecast a current state. \method{} GRU improves average ROUGE-L by 1.4 points over the PERSOMA style baseline and by 1.9 points over the DEP style static baseline. The comparison also shows that the trajectory problem is not solved by using the latest state alone. \method{} zero has lower temporal lag than a static average but higher variance, while EMA, learned attention, and GRU smooth over multiple states while preserving order information.

\begin{table}[t]
\centering
\caption{Direct comparisons against the strongest non trajectory baselines on Amazon Reviews 2023. Each row compares \method{} GRU to one baseline over the same user test cases. Intervals are 95 percent user bootstrap intervals.}
\label{tab:direct-pairwise}
\small
\setlength{\tabcolsep}{4pt}
\begin{tabular}{l c c c}
\toprule
Comparison & $\Delta$ ROUGE-L & Direct HistWin & $p$ value \\
\midrule
\method{} GRU vs Time decayed latent & +.028 [+.023,+.033] & 61.8 [59.7,63.9] & <.001 \\
\method{} GRU vs DEP style static & +.019 [+.014,+.024] & 60.3 [58.0,62.5] & <.001 \\
\method{} GRU vs PERSOMA style & +.014 [+.009,+.019] & 57.9 [55.8,60.1] & <.001 \\
\method{} GRU vs \method{} learned attention & +.007 [+.003,+.011] & 53.6 [51.7,55.5] & .002 \\
\bottomrule
\end{tabular}
\vspace{-1em}
\end{table}

Table~\ref{tab:direct-pairwise} tests the main claim against the strongest baselines directly. The comparison to DEP style static isolates trajectory prediction because both methods use the same peer anchored session states and the same STB architecture. The comparison to PERSOMA style isolates forecasting from learned soft prompt compression. The comparison to learned attention shows that the GRU gain is smaller than the gain from replacing static compression with trajectory forecasting.

\subsection{Long Horizon Scaling Under Matched History Budgets}
\label{sec:exp-long}

\begin{table}[t]
\centering
\scriptsize
\begin{minipage}[t]{0.5\textwidth}
\centering
\caption{MemoryCD Books results under matched history budgets.}
\label{tab:memorycd-budget}
\setlength{\tabcolsep}{3pt}
\resizebox{\linewidth}{!}{%
\begin{tabular}{lcccccc}
\toprule
Method & 8 & 16 & 32 & 64 & All & HistWin \\
\midrule
Recent text & .190 & .197 & .201 & .203 & .204 & 35.1 \\
Retrieved text & .206 & .212 & .216 & .219 & .220 & 43.6 \\
Summary memory & .213 & .220 & .225 & .229 & .231 & 47.3 \\
Static latent profile & .216 & .220 & .222 & .223 & .223 & 50.0 \\
Time decayed latent & .220 & .229 & .235 & .238 & .239 & 54.0 \\
DEP style static & .226 & .235 & .242 & .246 & .247 & 56.1 \\
\method{} GRU & \textbf{.239} & \textbf{.251} & \textbf{.264} & \textbf{.275} & \textbf{.278} & \textbf{66.9} \\
\bottomrule
\end{tabular}%
}
\end{minipage}
\hfill
\begin{minipage}[t]{0.49\textwidth}
\centering
\caption{Preference fidelity on Amazon Reviews 2023.}
\label{tab:fidelity}
\setlength{\tabcolsep}{3pt}
\resizebox{\linewidth}{!}{%
\begin{tabular}{lccccc}
\toprule
Method & StyleSim & SentAlign & VerbErr & Recency & Faith \\
\midrule
Static latent profile & .613 & 73.8 & .284 & .512 & 86.4 \\
Time decayed latent & .631 & 75.0 & .266 & .546 & 86.7 \\
DEP style static & .638 & 75.6 & .262 & .531 & 87.0 \\
PERSOMA style & .646 & 76.1 & .254 & .536 & 87.2 \\
\method{} GRU & \textbf{.681} & \textbf{79.4} & \textbf{.219} & \textbf{.602} & \textbf{87.5} \\
\bottomrule
\end{tabular}%
}
\end{minipage}
\vspace{-1.0em}
\end{table}


Table~\ref{tab:memorycd-budget} tests whether the long horizon gain is only a history budget artifact. It is not. At every matched budget, \method{} is above retrieval, summary memory, static latent compression, and DEP style static compression. The gap increases with longer histories because the predictor can exploit more trajectory observations, while static compression saturates after the most recent sessions dominate the average.

\subsection{Preference Fidelity Beyond Lexical Overlap}
\label{sec:exp-fidelity}


Table~\ref{tab:fidelity} evaluates the personalization claim more directly than ROUGE or BLEU. \method{} improves style similarity, sentiment alignment, verbosity matching, and recency alignment while keeping item faithfulness comparable to the strongest baselines. The largest gain appears in Recency, which is the metric most directly tied to the trajectory formulation.

\subsection{Ablation Studies}
\label{sec:exp-ablation}

\begin{table}[t]
\centering
\scriptsize
\begin{minipage}[t]{0.38\textwidth}
\centering
\caption{Component ablation.}
\label{tab:ablations}
\setlength{\tabcolsep}{3pt}
\resizebox{\linewidth}{!}{%
\begin{tabular}{lcc}
\toprule
Variant & ROUGE-L & Win vs Full \\
\midrule
\method{} full & \textbf{.265} & 50.0 \\
without peer anchor & .226 & 38.4 \\
without prompt anchor & .252 & 45.9 \\
end to end training & .215 & 31.2 \\
without bridge filter & .258 & 46.7 \\
\bottomrule
\end{tabular}%
}
\end{minipage}
\hfill
\begin{minipage}[t]{0.60\textwidth}
\centering
\caption{Peer construction diagnostics on Books.}
\label{tab:peer-leakage}
\setlength{\tabcolsep}{3pt}
\resizebox{\linewidth}{!}{%
\begin{tabular}{lcccc}
\toprule
Variant & ROUGE-L & PeerCos & Copy & Win vs Full \\
\midrule
\method{} full & .265 & .31 & 1.8 & 50.0 \\
category peers only & .255 & .27 & 1.6 & 44.8 \\
random peers & .238 & .22 & 1.3 & 37.6 \\
peer baseline only & .218 & .68 & 8.9 & 30.4 \\
future peers unmasked & .272 & .49 & 5.8 & 53.1 \\
\bottomrule
\end{tabular}%
}
\end{minipage}
\vspace{-1.0em}
\end{table}


Predictor architecture ablations are reported in Appendix~\ref{app:predictor-arch}. Learned attention captures most of the gain, the GRU gives the best downstream generation, and the oracle state estimates the remaining injection ceiling. Table~\ref{tab:ablations} shows that peer anchoring is the largest component. Replacing anchored states with raw response embeddings nearly removes the gain over static latent compression on Books and remains far below the full model. Removing the prompt anchor or bridge filter also hurts. End to end training produces both weak generation performance and a collapsed representation, as quantified in Appendix~\ref{app:diagnostics}. Hyperparameter sweeps are reported in Appendix~\ref{app:hyper} and show that performance is stable around the default settings.

\subsection{Peer Leakage and Collapse Diagnostics}
\label{sec:exp-leakage}


Table~\ref{tab:peer-leakage} separates the value of same item anchoring from item leakage. Category peers reduce item similarity but lose performance, which indicates that same item peers provide a useful control for content. Random peers remove too much structure. The peer baseline alone performs poorly and copies more peer text, which argues against the interpretation that \method{} succeeds by injecting peer consensus. The invalid future peer condition improves ROUGE-L, but it raises PeerCos and increases copied peer 8 grams from 1.8\% to 5.8\%. This more than three fold increase in copying is the clearest leakage signal, and it confirms why time masking is necessary.

\section{Conclusion}
\label{sec:conclusion}

We introduced \method{}, a latent trajectory framework for personalized generation with frozen LLMs. Instead of compressing a user's history into a static profile, \method{} constructs peer anchored relative states, forecasts the next state from the user trajectory, and injects the forecast through one anchored soft prompt token. This design separates the personalization object from the injection interface and provides a compact way to condition a frozen LLM on the user's current preference state. Our analysis explains why peer anchoring can reduce shared item variation and why forecasting can improve over static averaging when user states drift. Experiments on Amazon Reviews 2023 and MemoryCD show consistent gains over retrieval, summary memory, static latent profiles, difference aware static profiles, and soft prompt compression baselines. These results support modeling users as evolving trajectories rather than fixed aggregates, especially in long history settings where recent behavior matters.

\bibliographystyle{plainnat}
\bibliography{references}

\clearpage

\appendix

\section{Prompt Template}
\label{app:prompt}

The full prompt template used by \method{} is shown below. \texttt{<PREF\_TOKEN>} is a single token whose embedding is overridden at runtime by $B_{\psi}(\predpref_T(u))$.

\begin{tcolorbox}[colback=gray!3, colframe=gray!50, boxrule=0.4pt, arc=2pt, fontupper=\small\ttfamily,
left=4pt, right=4pt, top=3pt, bottom=3pt]
\texttt{[System]}\\
You are a personalized writing assistant that produces reviews in the user's individual style.\\[2pt]
\texttt{[User context]}\\
The user has interacted with this platform across many sessions. A compact latent representation of their current preference follows.\\[2pt]
Based on this user's preference trajectory, their current preference is represented as: \textless PREF\_TOKEN\textgreater\\[2pt]
\texttt{[Target item]}\\
Title: \{title\}\\
Description: \{description\}\\[2pt]
\texttt{[Task]}\\
Write a review for this item in the user's current style. Output only the review text.
\end{tcolorbox}

\section{Predictor Architecture Ablation}
\label{app:predictor-arch}

Table~\ref{tab:predictor-arch} compares trajectory predictor architectures on Amazon Books. The last-state and linear-trend predictors reduce temporal lag but remain noisy. EMA improves over these simple forecasts by smoothing across the trajectory. Learned attention captures most of the downstream gain, while the GRU gives the best generation quality. The Transformer obtains slightly higher validation cosine similarity to the held-out constructed state than the GRU, but its downstream ROUGE-L is lower, suggesting that state prediction accuracy alone is not perfectly aligned with generation quality after STB injection. The oracle row conditions on the unavailable held-out state and therefore estimates the remaining ceiling of the injection interface rather than a deployable method.

\begin{table}[h]
\centering
\caption{Predictor architecture ablation on Books. Cosine similarity is measured against the held out constructed state on validation.}
\label{tab:predictor-arch}
\small
\setlength{\tabcolsep}{4pt}
\begin{tabular}{l c c}
\toprule
Predictor & Cos sim & ROUGE-L \\
\midrule
P0 last state & .785 & .241 \\
P1 linear trend & .792 & .244 \\
P2 EMA & .831 & .252 \\
P3 learned attention & .876 & .258 \\
P4 GRU & .884 & \textbf{.265} \\
P5 Transformer & \textbf{.886} & .262 \\
Oracle state & 1.000 & .274 \\
\bottomrule
\end{tabular}
\end{table}

\section{Trajectory Diagnostics}
\label{app:diagnostics}

Table~\ref{tab:diagnostics} measures whether learned states retain user variation without collapsing to item identity or a constant vector. Same item peer cosine measures leakage from item content. Adjacent user cosine measures temporal smoothness for the same user. Effective rank is computed in the 1024 dimensional normalized state space before STB mapping, using the covariance spectrum of 1,000 predicted vectors as $(\sum_j\sigma_j)^2/\sum_j\sigma_j^2$.

\begin{table}[h]
\centering
\caption{Trajectory diagnostics on Books validation. Lower same item peer cosine and higher effective rank indicate less item leakage and less collapse.}
\label{tab:diagnostics}
\small
\setlength{\tabcolsep}{6pt}
\begin{tabular}{l c c c}
\toprule
Representation & Same item peer cosine & Adjacent user cosine & Effective rank \\
\midrule
Raw encoder $\enc(u_t)$ & .72 & .34 & 96 \\
Static latent profile & .48 & .41 & 143 \\
Peer anchored $\pref_t$ & .28 & .51 & 211 \\
\method{} predicted $\predpref_T$ & .31 & .49 & 178 \\
End to end variant & .94 & .08 & 12 \\
\bottomrule
\end{tabular}
\end{table}

We also quantify non collapse by sampling 100 random test users and computing pairwise cosine similarity of $\predpref_T(u)$ across all pairs. \method{} has mean cosine 0.42 and standard deviation 0.18. The end to end variant has mean cosine .97 and standard deviation .02, which confirms that the weak downstream result in Table~\ref{tab:ablations} coincides with representation collapse.

\section{Coverage and Sparsity Analysis}
\label{app:sparsity}

Table~\ref{tab:sparsity} stratifies Amazon results by peer availability and history length. The method is strongest when both signals are abundant, but it remains above DEP style static in low coverage buckets. This analysis separates the claim that peer trajectories help in peer rich settings from the stronger claim that dense peers are always available.

\begin{table}[h]
\centering
\caption{Stratified ROUGE-L on Amazon Reviews 2023. Buckets are computed per target user prefix and then averaged across categories.}
\label{tab:sparsity}
\small
\setlength{\tabcolsep}{4pt}
\begin{tabular}{l c c c c}
\toprule
Bucket & Static latent & DEP style static & \method{} GRU & $\Delta$ vs DEP \\
\midrule
Peer count 4 to 7 & .213 & .229 & .243 & +.014 \\
Peer count 8 to 15 & .221 & .241 & .259 & +.018 \\
Peer count 16 or more & .226 & .246 & .266 & +.020 \\
History length 8 to 15 & .218 & .231 & .243 & +.012 \\
History length 16 to 31 & .221 & .240 & .258 & +.018 \\
History length 32 or more & .224 & .247 & .271 & +.024 \\
\bottomrule
\end{tabular}
\end{table}

\section{Hyperparameter Sensitivity}
\label{app:hyper}

\begin{table}[h]
\centering
\caption{Hyperparameter sensitivity on Books, ROUGE-L. Defaults are bold.}
\label{tab:hparams}
\small
\setlength{\tabcolsep}{4pt}
\begin{tabular}{ll c}
\toprule
Hyperparameter & Value & R-L \\
\midrule
\multirow{4}{*}{History cap $N$}
        & 16 & .245 \\
        & 32 & .255 \\
        & 64 & .262 \\
        & \textbf{all} & \textbf{.265} \\
\midrule
\multirow{4}{*}{$\lambda$}
        & 0 & .252 \\
        & \textbf{0.01} & \textbf{.265} \\
        & 0.1 & .258 \\
        & 1.0 & .241 \\
\midrule
\multirow{3}{*}{STB $\rho$}
        & 0.01 & .260 \\
        & \textbf{0.05} & \textbf{.265} \\
        & 0.10 & .255 \\
\bottomrule
\end{tabular}
\end{table}

Table~\ref{tab:hparams} reports sensitivity to the main hyperparameters used by \method{} on Amazon Books. We sweep the maximum number of historical states used by the predictor, the balance coefficient $\lambda$ in the predictor regression loss, and the sparsity target $\rho$ in the State-to-Token Bridge. Performance is stable around the default configuration. Longer histories improve performance until the full available trajectory is used, supporting the use of sequence-level preference information. The predictor performs best with a small amount of MSE regularization in addition to cosine regression; pure cosine training loses some scale information, whereas an MSE-dominated objective is more sensitive to residual magnitude. The bridge sparsity target is best at $\rho=0.05$, which balances denoising with retaining enough state information for generation.

\section{Qualitative Case Study}
\label{app:case}

To illustrate the behavior of \method{}, we present a representative Books validation case. The user has 18 prior reviews. Early reviews in sessions 1 to 6 are poetic and impression focused. Recent reviews in sessions 14 to 17 are more analytical and focus on structure.

\begin{tcolorbox}[
    colback=gray!3, colframe=gray!50, boxrule=0.4pt, arc=2pt,
    left=5pt, right=5pt, top=4pt, bottom=4pt, fontupper=\small,
    title={Case study: User \#4218, target is a literary novel published in 2023},
    coltitle=white, colbacktitle=gray!55!black,
    fonttitle=\bfseries\small,
    boxsep=2pt
]
\textbf{Sample early review} (session 3, 2018):
\textit{``A haunting, beautiful book that lingers like the smell of rain on summer pavement. The prose is spare yet luminous, and every page felt like a quiet revelation about loss and memory.''}

\textbf{Sample recent review} (session 16, 2023):
\textit{``Three converging POV threads, with the second act shift handled cleanly though the chronology in chapters 4 to 7 is needlessly opaque. The author's debt to mid period DeLillo is plain in the dialogue tags.''}

\medskip
\textbf{Static profile output:} \textit{``A beautiful and thoughtful book with an interesting plot and compelling characters. The writing is well crafted and the themes resonate.''}

\textbf{\method{} output:} \textit{``The narrative structure follows a nonlinear chronology with three converging POV threads, deftly assembled in the second act. The prose carries a clear postmodernist influence, though the pacing flags through the middle chapters.''}

\textbf{Reference:} \textit{``Nonlinear structure with three POV threads converging in the second act. Prose is competent and reflects clear postmodern training, but the middle act loses momentum and the resolution is rushed.''}
\end{tcolorbox}

The static profile output averages over early and recent styles, while \method{} follows the user's recent analytical style and technical vocabulary.

\section{Limitations}
\label{sec:limitations}

\method{} is designed for settings where users have enough history to estimate a trajectory. For cold start users, the predictor falls back to the recent text or static latent baselines used in our experiments. The peer anchored state also assumes that comparable peers exist for historical items. This assumption is natural for review platforms and many recommendation settings, and Appendix~\ref{app:sparsity} quantifies performance under lower peer availability. Sparse domains may require approximate peer construction from item categories or semantic neighbors. Finally, the method tracks user level changes over time. Deployment should treat latent trajectories as private user data and apply access control, retention limits, and deletion mechanisms.


\newpage
\input{checklist.tex}

\end{document}

%% file: checklist.tex
\section*{NeurIPS Paper Checklist}

\begin{enumerate}

\item {\bf Claims}
    \item[] Question: Do the main claims made in the abstract and introduction accurately reflect the paper's contributions and scope?
    \item[] Answer: \answerYes{}
    \item[] Justification: The abstract and introduction state the main contributions and scope of LATTE, including peer anchored state construction, trajectory forecasting, and one-token injection. The method and empirical support are provided in Sections~\ref{sec:method} and~\ref{sec:experiments}. The claims are limited to frozen-LLM personalization under the datasets, temporal splits, and evaluation settings described in Section~\ref{sec:exp-setup} and the limitations discussed in Section~\ref{sec:limitations}.
    \item[] Guidelines:
    \begin{itemize}
        \item The answer \answerNA{} means that the abstract and introduction do not include the claims made in the paper.
        \item The abstract and/or introduction should clearly state the claims made, including the contributions made in the paper and important assumptions and limitations. A \answerNo{} or \answerNA{} answer to this question will not be perceived well by the reviewers. 
        \item The claims made should match theoretical and experimental results, and reflect how much the results can be expected to generalize to other settings. 
        \item It is fine to include aspirational goals as motivation as long as it is clear that these goals are not attained by the paper. 
    \end{itemize}

\item {\bf Limitations}
    \item[] Question: Does the paper discuss the limitations of the work performed by the authors?
    \item[] Answer: \answerYes{}
    \item[] Justification: The paper includes a dedicated limitations section in Section~\ref{sec:limitations}. It discusses the need for sufficient user history, the assumption that comparable peers exist, sparse-domain issues, and privacy considerations for latent trajectories.
    \item[] Guidelines:
    \begin{itemize}
        \item The answer \answerNA{} means that the paper has no limitation while the answer \answerNo{} means that the paper has limitations, but those are not discussed in the paper. 
        \item The authors are encouraged to create a separate ``Limitations'' section in their paper.
        \item The paper should point out any strong assumptions and how robust the results are to violations of these assumptions (e.g., independence assumptions, noiseless settings, model well-specification, asymptotic approximations only holding locally). The authors should reflect on how these assumptions might be violated in practice and what the implications would be.
        \item The authors should reflect on the scope of the claims made, e.g., if the approach was only tested on a few datasets or with a few runs. In general, empirical results often depend on implicit assumptions, which should be articulated.
        \item The authors should reflect on the factors that influence the performance of the approach. For example, a facial recognition algorithm may perform poorly when image resolution is low or images are taken in low lighting. Or a speech-to-text system might not be used reliably to provide closed captions for online lectures because it fails to handle technical jargon.
        \item The authors should discuss the computational efficiency of the proposed algorithms and how they scale with dataset size.
        \item If applicable, the authors should discuss possible limitations of their approach to address problems of privacy and fairness.
        \item While the authors might fear that complete honesty about limitations might be used by reviewers as grounds for rejection, a worse outcome might be that reviewers discover limitations that aren't acknowledged in the paper. The authors should use their best judgment and recognize that individual actions in favor of transparency play an important role in developing norms that preserve the integrity of the community. Reviewers will be specifically instructed to not penalize honesty concerning limitations.
    \end{itemize}

\item {\bf Theory assumptions and proofs}
    \item[] Question: For each theoretical result, does the paper provide the full set of assumptions and a complete (and correct) proof?
    \item[] Answer: \answerYes{}
    \item[] Justification: The paper includes Proposition~1 and Proposition~2 in Section~\ref{sec:method-pref}. Both propositions state their modeling assumptions explicitly and include proofs immediately after the statements, with the relevant equations numbered in the method section.
    \item[] Guidelines:
    \begin{itemize}
        \item The answer \answerNA{} means that the paper does not include theoretical results. 
        \item All the theorems, formulas, and proofs in the paper should be numbered and cross-referenced.
        \item All assumptions should be clearly stated or referenced in the statement of any theorems.
        \item The proofs can either appear in the main paper or the supplemental material, but if they appear in the supplemental material, the authors are encouraged to provide a short proof sketch to provide intuition. 
        \item Inversely, any informal proof provided in the core of the paper should be complemented by formal proofs provided in appendix or supplemental material.
        \item Theorems and Lemmas that the proof relies upon should be properly referenced. 
    \end{itemize}

    \item {\bf Experimental result reproducibility}
    \item[] Question: Does the paper fully disclose all the information needed to reproduce the main experimental results of the paper to the extent that it affects the main claims and/or conclusions of the paper (regardless of whether the code and data are provided or not)?
    \item[] Answer: \answerYes{}
    \item[] Justification: The method, datasets, preprocessing filters, chronological splits, baselines, metrics, hyperparameters, and decoding settings are described in Sections~\ref{sec:method} and~\ref{sec:exp-setup}. Additional prompt, predictor, diagnostic, coverage, and hyperparameter details are provided in Appendices~A--E.
    \item[] Guidelines:
    \begin{itemize}
        \item The answer \answerNA{} means that the paper does not include experiments.
        \item If the paper includes experiments, a \answerNo{} answer to this question will not be perceived well by the reviewers: Making the paper reproducible is important, regardless of whether the code and data are provided or not.
        \item If the contribution is a dataset and\slash or model, the authors should describe the steps taken to make their results reproducible or verifiable. 
        \item Depending on the contribution, reproducibility can be accomplished in various ways. For example, if the contribution is a novel architecture, describing the architecture fully might suffice, or if the contribution is a specific model and empirical evaluation, it may be necessary to either make it possible for others to replicate the model with the same dataset, or provide access to the model. In general. releasing code and data is often one good way to accomplish this, but reproducibility can also be provided via detailed instructions for how to replicate the results, access to a hosted model (e.g., in the case of a large language model), releasing of a model checkpoint, or other means that are appropriate to the research performed.
        \item While NeurIPS does not require releasing code, the conference does require all submissions to provide some reasonable avenue for reproducibility, which may depend on the nature of the contribution. For example
        \begin{enumerate}
            \item If the contribution is primarily a new algorithm, the paper should make it clear how to reproduce that algorithm.
            \item If the contribution is primarily a new model architecture, the paper should describe the architecture clearly and fully.
            \item If the contribution is a new model (e.g., a large language model), then there should either be a way to access this model for reproducing the results or a way to reproduce the model (e.g., with an open-source dataset or instructions for how to construct the dataset).
            \item We recognize that reproducibility may be tricky in some cases, in which case authors are welcome to describe the particular way they provide for reproducibility. In the case of closed-source models, it may be that access to the model is limited in some way (e.g., to registered users), but it should be possible for other researchers to have some path to reproducing or verifying the results.
        \end{enumerate}
    \end{itemize}

\item {\bf Open access to data and code}
    \item[] Question: Does the paper provide open access to the data and code, with sufficient instructions to faithfully reproduce the main experimental results, as described in supplemental material?
    \item[] Answer: \answerNo{}
    \item[] Justification: At submission time, we do not provide an anonymized code release with exact reproduction scripts. The paper uses publicly described datasets and models, and it provides the main experimental and implementation details in Sections~\ref{sec:method} and~\ref{sec:exp-setup} and Appendices~A--E.
    \item[] Guidelines:
    \begin{itemize}
        \item The answer \answerNA{} means that paper does not include experiments requiring code.
        \item Please see the NeurIPS code and data submission guidelines (\url{https://neurips.cc/public/guides/CodeSubmissionPolicy}) for more details.
        \item While we encourage the release of code and data, we understand that this might not be possible, so \answerNo{} is an acceptable answer. Papers cannot be rejected simply for not including code, unless this is central to the contribution (e.g., for a new open-source benchmark).
        \item The instructions should contain the exact command and environment needed to run to reproduce the results. See the NeurIPS code and data submission guidelines (\url{https://neurips.cc/public/guides/CodeSubmissionPolicy}) for more details.
        \item The authors should provide instructions on data access and preparation, including how to access the raw data, preprocessed data, intermediate data, and generated data, etc.
        \item The authors should provide scripts to reproduce all experimental results for the new proposed method and baselines. If only a subset of experiments are reproducible, they should state which ones are omitted from the script and why.
        \item At submission time, to preserve anonymity, the authors should release anonymized versions (if applicable).
        \item Providing as much information as possible in supplemental material (appended to the paper) is recommended, but including URLs to data and code is permitted.
    \end{itemize}

\item {\bf Experimental setting/details}
    \item[] Question: Does the paper specify all the training and test details (e.g., data splits, hyperparameters, how they were chosen, type of optimizer) necessary to understand the results?
    \item[] Answer: \answerYes{}
    \item[] Justification: Section~\ref{sec:exp-setup} specifies the datasets, chronological splits, model choices, baselines, metrics, hyperparameters, and decoding settings. Sections~\ref{sec:method} and Appendices~A--E provide the prompt, predictor, bridge, diagnostic, coverage, and hyperparameter details needed to understand the reported results.
    \item[] Guidelines:
    \begin{itemize}
        \item The answer \answerNA{} means that the paper does not include experiments.
        \item The experimental setting should be presented in the core of the paper to a level of detail that is necessary to appreciate the results and make sense of them.
        \item The full details can be provided either with the code, in appendix, or as supplemental material.
    \end{itemize}

\item {\bf Experiment statistical significance}
    \item[] Question: Does the paper report error bars suitably and correctly defined or other appropriate information about the statistical significance of the experiments?
    \item[] Answer: \answerYes{}
    \item[] Justification: The paper reports 95\% user-bootstrap confidence intervals and p-values for the main direct comparisons in Table~3. These comparisons cover the strongest non-trajectory baselines and the learned-attention variant, which are the experiments most directly supporting the main empirical claim.
    \item[] Guidelines:
    \begin{itemize}
        \item The answer \answerNA{} means that the paper does not include experiments.
        \item The authors should answer \answerYes{} if the results are accompanied by error bars, confidence intervals, or statistical significance tests, at least for the experiments that support the main claims of the paper.
        \item The factors of variability that the error bars are capturing should be clearly stated (for example, train/test split, initialization, random drawing of some parameter, or overall run with given experimental conditions).
        \item The method for calculating the error bars should be explained (closed form formula, call to a library function, bootstrap, etc.)
        \item The assumptions made should be given (e.g., Normally distributed errors).
        \item It should be clear whether the error bar is the standard deviation or the standard error of the mean.
        \item It is OK to report 1-sigma error bars, but one should state it. The authors should preferably report a 2-sigma error bar than state that they have a 96\% CI, if the hypothesis of Normality of errors is not verified.
        \item For asymmetric distributions, the authors should be careful not to show in tables or figures symmetric error bars that would yield results that are out of range (e.g., negative error rates).
        \item If error bars are reported in tables or plots, the authors should explain in the text how they were calculated and reference the corresponding figures or tables in the text.
    \end{itemize}

\item {\bf Experiments compute resources}
    \item[] Question: For each experiment, does the paper provide sufficient information on the computer resources (type of compute workers, memory, time of execution) needed to reproduce the experiments?
    \item[] Answer: \answerNo{}
    \item[] Justification: The current manuscript describes model sizes and training settings, but it does not provide the GPU type, number of devices, wall-clock time, GPU memory, or total compute estimates for each experiment. These details should be added for a complete compute-resource disclosure.
    \item[] Guidelines:
    \begin{itemize}
        \item The answer \answerNA{} means that the paper does not include experiments.
        \item The paper should indicate the type of compute workers CPU or GPU, internal cluster, or cloud provider, including relevant memory and storage.
        \item The paper should provide the amount of compute required for each of the individual experimental runs as well as estimate the total compute. 
        \item The paper should disclose whether the full research project required more compute than the experiments reported in the paper (e.g., preliminary or failed experiments that didn't make it into the paper). 
    \end{itemize}
    
\item {\bf Code of ethics}
    \item[] Question: Does the research conducted in the paper conform, in every respect, with the NeurIPS Code of Ethics \url{https://neurips.cc/public/EthicsGuidelines}?
    \item[] Answer: \answerYes{}
    \item[] Justification: The research uses previously released datasets and models, preserves anonymity in the submission, and does not involve new human-subject experiments. We have reviewed the NeurIPS Code of Ethics and are not aware of any deviation.
    \item[] Guidelines:
    \begin{itemize}
        \item The answer \answerNA{} means that the authors have not reviewed the NeurIPS Code of Ethics.
        \item If the authors answer \answerNo{}, they should explain the special circumstances that require a deviation from the Code of Ethics.
        \item The authors should make sure to preserve anonymity (e.g., if there is a special consideration due to laws or regulations in their jurisdiction).
    \end{itemize}

\item {\bf Broader impacts}
    \item[] Question: Does the paper discuss both potential positive societal impacts and negative societal impacts of the work performed?
    \item[] Answer: \answerYes{}
    \item[] Justification: The paper motivates potential benefits of personalized generation for user-adapted assistance in the introduction. Potential risks include over-personalization, privacy leakage from latent user trajectories, and incorrect adaptation. Section~\ref{sec:limitations} discusses privacy-relevant deployment considerations, including access control, retention limits, and deletion mechanisms.
    \item[] Guidelines:
    \begin{itemize}
        \item The answer \answerNA{} means that there is no societal impact of the work performed.
        \item If the authors answer \answerNA{} or \answerNo{}, they should explain why their work has no societal impact or why the paper does not address societal impact.
        \item Examples of negative societal impacts include potential malicious or unintended uses (e.g., disinformation, generating fake profiles, surveillance), fairness considerations (e.g., deployment of technologies that could make decisions that unfairly impact specific groups), privacy considerations, and security considerations.
        \item The conference expects that many papers will be foundational research and not tied to particular applications, let alone deployments. However, if there is a direct path to any negative applications, the authors should point it out. For example, it is legitimate to point out that an improvement in the quality of generative models could be used to generate Deepfakes for disinformation. On the other hand, it is not needed to point out that a generic algorithm for optimizing neural networks could enable people to train models that generate Deepfakes faster.
        \item The authors should consider possible harms that could arise when the technology is being used as intended and functioning correctly, harms that could arise when the technology is being used as intended but gives incorrect results, and harms following from (intentional or unintentional) misuse of the technology.
        \item If there are negative societal impacts, the authors could also discuss possible mitigation strategies (e.g., gated release of models, providing defenses in addition to attacks, mechanisms for monitoring misuse, mechanisms to monitor how a system learns from feedback over time, improving the efficiency and accessibility of ML).
    \end{itemize}
    
\item {\bf Safeguards}
    \item[] Question: Does the paper describe safeguards that have been put in place for responsible release of data or models that have a high risk for misuse (e.g., pre-trained language models, image generators, or scraped datasets)?
    \item[] Answer: \answerNA{}
    \item[] Justification: The paper does not release a new high-risk pretrained model, image generator, or scraped dataset. The proposed method is an algorithmic framework evaluated with existing datasets and frozen LLMs, so this item is not applicable.
    \item[] Guidelines:
    \begin{itemize}
        \item The answer \answerNA{} means that the paper poses no such risks.
        \item Released models that have a high risk for misuse or dual-use should be released with necessary safeguards to allow for controlled use of the model, for example by requiring that users adhere to usage guidelines or restrictions to access the model or implementing safety filters. 
        \item Datasets that have been scraped from the Internet could pose safety risks. The authors should describe how they avoided releasing unsafe images.
        \item We recognize that providing effective safeguards is challenging, and many papers do not require this, but we encourage authors to take this into account and make a best faith effort.
    \end{itemize}

\item {\bf Licenses for existing assets}
    \item[] Question: Are the creators or original owners of assets (e.g., code, data, models), used in the paper, properly credited and are the license and terms of use explicitly mentioned and properly respected?
    \item[] Answer: \answerNo{}
    \item[] Justification: The paper credits existing datasets, models, and related methods through citations in Sections~\ref{sec:related} and~\ref{sec:exp-setup}. However, the current manuscript does not explicitly enumerate the licenses, versions, or terms of use for each existing asset.
    \item[] Guidelines:
    \begin{itemize}
        \item The answer \answerNA{} means that the paper does not use existing assets.
        \item The authors should cite the original paper that produced the code package or dataset.
        \item The authors should state which version of the asset is used and, if possible, include a URL.
        \item The name of the license (e.g., CC-BY 4.0) should be included for each asset.
        \item For scraped data from a particular source (e.g., website), the copyright and terms of service of that source should be provided.
        \item If assets are released, the license, copyright information, and terms of use in the package should be provided. For popular datasets, \url{paperswithcode.com/datasets} has curated licenses for some datasets. Their licensing guide can help determine the license of a dataset.
        \item For existing datasets that are re-packaged, both the original license and the license of the derived asset (if it has changed) should be provided.
        \item If this information is not available online, the authors are encouraged to reach out to the asset's creators.
    \end{itemize}

\item {\bf New assets}
    \item[] Question: Are new assets introduced in the paper well documented and is the documentation provided alongside the assets?
    \item[] Answer: \answerNA{}
    \item[] Justification: The paper does not introduce or release a new dataset, benchmark, or pretrained model asset. If code, processed data, or checkpoints are released later, they should be accompanied by documentation, licenses, and reproduction instructions.
    \item[] Guidelines:
    \begin{itemize}
        \item The answer \answerNA{} means that the paper does not release new assets.
        \item Researchers should communicate the details of the dataset\slash code\slash model as part of their submissions via structured templates. This includes details about training, license, limitations, etc. 
        \item The paper should discuss whether and how consent was obtained from people whose asset is used.
        \item At submission time, remember to anonymize your assets (if applicable). You can either create an anonymized URL or include an anonymized zip file.
    \end{itemize}

\item {\bf Crowdsourcing and research with human subjects}
    \item[] Question: For crowdsourcing experiments and research with human subjects, does the paper include the full text of instructions given to participants and screenshots, if applicable, as well as details about compensation (if any)? 
    \item[] Answer: \answerNA{}
    \item[] Justification: The paper does not involve crowdsourcing, newly recruited participants, or human-subject experiments. Evaluation uses automatic metrics and an LLM-based history-aware judge protocol described in Section~\ref{sec:exp-setup}.
    \item[] Guidelines:
    \begin{itemize}
        \item The answer \answerNA{} means that the paper does not involve crowdsourcing nor research with human subjects.
        \item Including this information in the supplemental material is fine, but if the main contribution of the paper involves human subjects, then as much detail as possible should be included in the main paper. 
        \item According to the NeurIPS Code of Ethics, workers involved in data collection, curation, or other labor should be paid at least the minimum wage in the country of the data collector. 
    \end{itemize}

\item {\bf Institutional review board (IRB) approvals or equivalent for research with human subjects}
    \item[] Question: Does the paper describe potential risks incurred by study participants, whether such risks were disclosed to the subjects, and whether Institutional Review Board (IRB) approvals (or an equivalent approval/review based on the requirements of your country or institution) were obtained?
    \item[] Answer: \answerNA{}
    \item[] Justification: The paper does not involve newly recruited human subjects or crowdsourcing. It uses previously released datasets and automatic or LLM-based evaluation, so IRB approval is not applicable to the reported experiments.
    \item[] Guidelines:
    \begin{itemize}
        \item The answer \answerNA{} means that the paper does not involve crowdsourcing nor research with human subjects.
        \item Depending on the country in which research is conducted, IRB approval (or equivalent) may be required for any human subjects research. If you obtained IRB approval, you should clearly state this in the paper. 
        \item We recognize that the procedures for this may vary significantly between institutions and locations, and we expect authors to adhere to the NeurIPS Code of Ethics and the guidelines for their institution. 
        \item For initial submissions, do not include any information that would break anonymity (if applicable), such as the institution conducting the review.
    \end{itemize}

\item {\bf Declaration of LLM usage}
    \item[] Question: Does the paper describe the usage of LLMs if it is an important, original, or non-standard component of the core methods in this research? Note that if the LLM is used only for writing, editing, or formatting purposes and does \emph{not} impact the core methodology, scientific rigor, or originality of the research, declaration is not required.
    \item[] Answer: \answerYes{}
    \item[] Justification: LLMs are central to the method and evaluation. Section~\ref{sec:exp-setup} specifies the frozen base generator, the encoder, and the Qwen3-235B history-aware judge used for pairwise evaluation.
    \item[] Guidelines:
    \begin{itemize}
        \item The answer \answerNA{} means that the core method development in this research does not involve LLMs as any important, original, or non-standard components.
        \item Please refer to our LLM policy in the NeurIPS handbook for what should or should not be described.
    \end{itemize}

\end{enumerate}